\documentclass[twoside]{article}
\usepackage[accepted]{aistats2012}

\usepackage{epsfig}
\usepackage{subfigure}
\usepackage{calc}
\usepackage{amssymb}
\usepackage{amstext}
\usepackage{amsmath}
\usepackage{amsthm}
\usepackage{multicol}
\usepackage{pslatex}
\usepackage{natbib}
\bibpunct[; ]{(}{)}{,}{a}{}{;}
\makeatletter
\let\bibhang\@undefined
\makeatother
\usepackage{apalike}
\usepackage{algorithmic}
\usepackage{algorithm}

\usepackage{subfigure}
\usepackage{url}
\usepackage{multirow}

\newtheorem{theorem}{Theorem}

\newtheorem{proposition}{Proposition}

\newcommand{\mathds}[1]{\mathbb{#1}}
\newcommand{\beq}{\begin{equation}}
\newcommand{\eeq}{\end{equation}}
\newcommand{\IR}{\mathds{R}}
\newcommand{\E}{\mathds{E}}
\newcommand{\Exp}{\mathds{E}}

\newcommand{\cH}{{\mathcal H}}

\newcommand{\cS}{{\mathcal S}}

\def\sjwresolved#1{}

\def\noprint#1{}

\newcommand{\btab}{\begin{tabbing}
\ \ \= thenn \= thenn \= thenn \= thenn \= thenn \= \kill}
\newcommand{\etab}{\end{tabbing}}

\usepackage{xspace}

\newcommand{\svmlight}{SVM-Light\xspace}
\newcommand{\svmperf}{SVM-Perf\xspace}
\newcommand{\asset}{ASSET$_M$\xspace}
\newcommand{\assetst}{ASSET$^*_M$\xspace}
\newcommand{\assetonline}{ASSET$_F$\xspace}
\newcommand{\assetstonline}{ASSET$^*_F$\xspace}
\newcommand{\assetboth}{ASSET$_M^{(*)}$\xspace}
\newcommand{\assetbothon}{ASSET$^{(*)}_{F}$\xspace}

\newcommand{\vs}[0]{\mathbf{s}}
\newcommand{\vt}[0]{\mathbf{t}}

\newcommand{\vw}[0]{\mathbf{w}}

\newcommand{\vzero}[0]{\mathbf{0}}

\newcommand{\tilphi}[0]{{\phi}}

\newcommand{\PsiG}[0]{\Psi^\circ} 
\newcommand{\phiG}[0]{\phi^\circ} 
\newcommand{\kG}{k^\circ} 

\newcommand{\sg}{\hspace{2mm}}
\newcommand{\zg}{\hspace{0.2mm}}
\newcommand{\mc}{\multicolumn}

\def\sjwresolved#1{}
\def\sklresolved#1{}

\def\noprint#1{}

\begin{document}

%


\runningtitle{Approximate Stochastic Subgradient Estimation Training for SVMs}

\twocolumn[
\aistatstitle{Approximate Stochastic Subgradient Estimation Training\\ for Support Vector Machines}

\aistatsauthor{Sangkyun Lee  \And Stephen J. Wright }
\aistatsaddress{ Computer Science Department, LS VIII\\ University of
  Technology\\ Dortmund, Germany \\ \url{sangkyun.lee@uni-dortmund.de} \And
Computer Sciences Department\\ University of Wisconsin \\Madison, USA \\ \url{swright@cs.wisc.edu}}


]

\begin{abstract}
  Subgradient algorithms for training support vector machines have
  been quite successful for solving large-scale and online learning
  problems. However, they have been restricted to linear kernels and
  strongly convex formulations. This paper describes efficient
  subgradient approaches without such limitations. Our approaches make
  use of randomized low-dimensional approximations to nonlinear
  kernels, and minimization of a reduced primal formulation using an
  algorithm based on robust stochastic approximation, which do not
  require strong convexity. Experiments illustrate that our approaches
  produce solutions of comparable prediction accuracy with the
  solutions acquired from existing SVM solvers, but often in much
  shorter time. We also suggest efficient prediction schemes that
  depend only on the dimension of kernel approximation, not on the
  number of support vectors.
\end{abstract}


\section{{Introduction}}\label{sec:intro}
\noindent
Support vector machines (SVMs) have been highly successful in machine
learning and data mining. Derivation, implementation, and analysis of
efficient solution methods for SVMs have been the subject of a great
deal of research during the past 12 years. We broadly categorize the
algorithms that have been proposed as follows.
\begin{itemize}
\item[(i)] {\em Decomposition methods} based on the dual SVM
  formulation, including SMO~\citep{Platt99}, LIBSVM~\citep{Fan05},
  {\svmlight}~\citep{Joa99}, GPDT~\citep{SerZZ05a}, and an online
  variant LASVM~\citep{BorAnt05}. The dual formulation allows
  nonlinear kernels to be introduced neatly into the formulation via
  the kernel trick~\citep{BosGI92}. 
\item[(ii)] {\em Cutting-plane methods} using special primal
  formulations to successively add violated constraints to the
  formulation. {\svmperf}~\citep{Joa06} and OCAS~\citep{FraS08} handle
  linear kernels, while the former approach is extended to nonlinear
  kernels in CPNY~\citep{JoaFY09} and CPSP~\citep{JoaYu09}.
\item[(iii)] {\em Subgradient methods} for the primal
  formulations. Available codes include Pegasos~\citep{ShaSSS07} and
  SGD~\citep{Bot05a}. These require linear kernels and strong
  convexity of the SVM formulation.
\end{itemize}

\noindent
Subgradient methods are of particular interest, since they are well
suited to large-scale and online learning problems.
Each iteration of these methods consists of simple computation,
usually involving a tiny subset of training data. Although a large
number of iterations might be required to find high accuracy
solutions, solutions of moderate accuracy are often enough for
learning purposes.
Despite such benefits, no subgradient algorithms have yet been
proposed for SVMs with {\em nonlinear kernels}, due mainly to the lack
of explicit representations for feature mappings of interesting
kernels, which are required in the primal
formulations. 
This paper aims to provide practical subgradient algorithms for
training SVMs with nonlinear kernels.

Unlike Pegasos~\citep{ShaSSS07}, we use Vapnik's original SVM
formulation without modifying the objective to be strongly convex.
Our main algorithm takes steplengths of size $O(1/\sqrt{t})$
(associated with robust stochastic approximation
methods~\citep{NemJ09,NemY83} and online convex
programming~\citep{Zin03}), rather than the $O(1/t)$ steplength scheme
in Pegasos.
Although the $O(1/\sqrt{t})$ schemes have slower convergence rate in
theory, we see no significant performance difference in practice to
$O(1/t)$ methods.
As we discuss later, optimal choices of a tuning
parameter in the objective often lead it to be nearly weakly convex,
thus nearly breaking the assumption that underlies the $O(1/t)$
scheme.


In our nonlinear-kernel formulation, we use low-dimensional
approximations
to the nonlinear feature mappings, whose dimension can be chosen by
users.
We obtain such approximations either by approximating the Gram matrix
or by constructing subspaces with random bases approximating the
feature spaces induced by kernels.
These approximations can be computed and applied to data points
iteratively, and thus are suited to an
online context. Further, we suggest an efficient way to make
predictions for test points using the approximate feature mappings,
without recovering the potentially large number of support vectors.


\section{{Nonlinear SVMs in the Primal}}
\label{sec:nlsvm}
\noindent
In this section we discuss the primal SVM formulation in a
low-dimensional feature space induced by kernel approximation. 


\subsection{Structure of the Formulation} \label{sec:basic}

We first analyze the structure of the primal SVM formulation with
nonlinear feature mappings. To unveil the details, here we apply the
tools of convex analysis rigorously, rather than appealing to the
representer theorem~\citep{KimW70} as in \citet{Cha07}, where the idea
was first introduced.
%
%

Let us consider the training point and label pairs $\{ (\vt_i, y_i)
\}_{i=1}^m$ for $\vt_i\in \IR^n$ and $y_i \in \IR$, and a feature
mapping $\tilphi: \IR^n \rightarrow \IR^d$. Given a convex loss function
$\ell(\cdot):\IR \rightarrow \IR \cup \{\infty\}$ and $\lambda>0$, the
primal SVM problem (for classification) can be stated as follows :
\begin{equation*}
\text{(P1)} \;\; \min_{\vw\in \IR^d, b\in \IR} \; \frac{\lambda}{2} \vw^T\vw 
                     + \frac{1}{m} \sum_{i=1}^m \ell(y_i (\vw^T \tilphi(\vt_i) + b))  .
\end{equation*}
The necessary and sufficient optimality conditions are
\begin{subequations} \label{eq:a2}
\begin{align}
\lambda \vw + \frac{1}{m} \sum_{i=1}^m \chi_i y_i \tilphi(\vt_i) &= 0, \label{eq:a2.a} \\ 
\frac{1}{m} \sum_{i=1}^m \chi_i y_i &= 0, \label{eq:a2.b} \\ 
\text{for some} \; \chi_i \in \partial \ell\left(y_i(\vw^T \tilphi(\vt_i) + b)\right), &
\; i=1,2,\dotsc,m. \label{eq:a2.c}
\end{align}
\end{subequations}
where $\partial \ell$ is the subdifferential of $\ell$.

We now consider the following substitution:
\begin{equation} \label{eq:a3}
\vw = \sum_{i=1}^m \alpha_i \tilphi(\vt_i)
\end{equation}
(which mimics the form of \eqref{eq:a2.a}). Motivated by this
expression, we formulate the following problem
\begin{equation*}
\text{(P2)} \;\;  \min_{\alpha \in \IR^m, b\in \IR} \; 
\frac{\lambda}{2} \alpha^T \Psi \alpha   
+ \frac{1}{m} \sum_{i=1}^m \ell\left( y_i (\Psi_{i \cdot} \alpha + b)\right),
\end{equation*}
where $\Psi \in \IR^{m \times m}$ is defined by
\begin{equation}\label{eq:a5}
\Psi_{ij} := \tilphi(\vt_i)^T \tilphi(\vt_j), \;\; i,j = 1,2,\dotsc,m,
\end{equation}
and $\Psi_{i \cdot}$ denotes the $i$-th row of $\Psi$.  Optimality
conditions for (P2) 
are as follows:
\begin{subequations} \label{eq:a6}
\begin{align}
\label{eq:a6.a}
\lambda \Psi \alpha + \frac{1}{m} \sum_{i=1}^m \beta_i y_i \Psi_{i \cdot}^T &= 0, \\
\label{eq:a6.b}
\frac{1}{m} \sum_{i=1}^m \beta_i y_i&= 0, \\
\label{eq:a6.c}
\makebox{for some} \; \beta_i \in \partial \ell\left( y_i(\Psi_{i \cdot}\alpha  + b)\right), &
\; i=1,2,\dotsc,m.
\end{align}
\end{subequations}
We can now derive the following result via convex analysis,
showing that the solution of (P2) 
can be used to derive a solution of (P1). 
This result can be regarded as a special case of the representer theorem.

\begin{proposition} \label{PROP:A1} Let $(\alpha,b) \in \IR^m \times
  \IR$ be a solution of (P2). 
  Then if we define $\vw$ by \eqref{eq:a3}, $(\vw ,b) \in \IR^d \times
  \IR$ is a solution of (P1). 
\end{proposition}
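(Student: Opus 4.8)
The plan is to exploit the fact, already asserted above, that the first-order conditions \eqref{eq:a2} are necessary \emph{and sufficient} for optimality of (P1) (this is because (P1) is convex: the quadratic term is convex and each summand is a convex loss composed with an affine map). Consequently it suffices to verify that the pair $(\vw,b)$, with $\vw$ defined by \eqref{eq:a3}, satisfies \eqref{eq:a2} for some admissible choice of the multipliers $\chi_i$.

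First I would record the two identities that follow at once from \eqref{eq:a3} and the definition \eqref{eq:a5} of $\Psi$: expanding the inner products gives $\vw^T\vw = \alpha^T\Psi\alpha$ and, for each $i$, $\vw^T\tilphi(\vt_i)=\Psi_{i\cdot}\alpha$. The second identity shows that the argument of $\ell$ in \eqref{eq:a2.c} is identical to the one appearing in \eqref{eq:a6.c}. I would therefore take $\chi_i := \beta_i$, where the $\beta_i$ are the multipliers guaranteed by \eqref{eq:a6} at the solution $(\alpha,b)$ of (P2). With this choice \eqref{eq:a2.c} holds immediately, and \eqref{eq:a2.b} is nothing but \eqref{eq:a6.b}.

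The remaining, and most delicate, task is to establish \eqref{eq:a2.a}, i.e.\ that
\[
g := \lambda\vw + \frac{1}{m}\sum_{i=1}^m \beta_i y_i \tilphi(\vt_i) \in \IR^d
\]
vanishes. Note that \eqref{eq:a6.a} is an equation in $\IR^m$, not in $\IR^d$, so it does not supply $g=0$ directly. What it does supply, after writing out its $k$-th component and factoring $\tilphi(\vt_k)^T$ out of every term via \eqref{eq:a5} (using the symmetry $\Psi_{ik}=\tilphi(\vt_k)^T\tilphi(\vt_i)$), is the orthogonality relation $\tilphi(\vt_k)^T g = 0$ for every $k$; that is, $g$ is orthogonal to $\mathrm{span}\{\tilphi(\vt_1),\dotsc,\tilphi(\vt_m)\}$.

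I would then close the argument by observing that $g$ itself lies in this span, since both $\vw=\sum_j\alpha_j\tilphi(\vt_j)$ and $\sum_i\beta_i y_i\tilphi(\vt_i)$ are linear combinations of the feature vectors $\tilphi(\vt_i)$. A vector that is simultaneously contained in a subspace and orthogonal to it satisfies $g^Tg=0$, hence $g=0$, which is exactly \eqref{eq:a2.a}. This last step is the crux of the proof, and it is precisely where the substitution \eqref{eq:a3} earns its keep: it is the fact that $\vw$ is \emph{built from} the feature vectors that upgrades the mere orthogonality provided by \eqref{eq:a6.a} into the full equality $g=0$. No converse direction of the representer theorem is needed, so the verification of \eqref{eq:a2} is complete once this point is made.
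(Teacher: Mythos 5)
Your proof is correct and follows essentially the same route as the paper: verify the sufficient optimality conditions \eqref{eq:a2} with $\chi_i=\beta_i$, where the only nontrivial step is \eqref{eq:a2.a}, settled by observing that the residual vector is both contained in $\mathrm{span}\{\tilphi(\vt_1),\dotsc,\tilphi(\vt_m)\}$ and orthogonal to it, hence zero. The paper phrases this same orthogonality argument by decomposing into a span component plus a null-space element $\xi$ and taking norms, but the substance is identical to your treatment of $g$.
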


\begin{proof}
  Since $(\alpha,b)$ solves (P2), 
  the conditions \eqref{eq:a6} hold, for some $\beta_i$,
  $i=1,2,\dotsc,m$. To prove the claim, it suffices to show that
  $(\vw,b)$ and $\chi$ satisfy \eqref{eq:a2}, where $\vw$ is defined
  by \eqref{eq:a3} and $\chi_i=\beta_i$ for all $i=1,2,\dotsc,m$.

  By substituting \eqref{eq:a5} into \eqref{eq:a6}, we have 
\begin{align*}
\lambda \sum_{i=1}^m \tilphi(\vt_j)^T \tilphi(\vt_i)  \alpha_i 
+ \frac{1}{m} \sum_{i=1}^m \beta_i y_i \tilphi(\vt_j)^T \tilphi(\vt_i) &= 0, \\
\frac{1}{m} \sum_{i=1}^m \beta_i y_i &= 0, \\
\;\; \beta_i \in \partial \ell \left( y_i \left( \sum_{j=1}^m \tilphi(\vt_j)^T \tilphi(\vt_i) \alpha_j  + b \right)\right), \;\; i=1,2,&\dotsc, m.
\end{align*}
From the first equality above, we have that 
\[
-\sum_{i=1}^m  \left( \alpha_i + \frac{y_i}{\lambda m} \beta_i \right) \tilphi(\vt_i) + \xi=0,
\]
for some $\xi \in \makebox{Null} \left( \left[ \tilphi(\vt_j)^T
  \right]_{j=1}^m \right)$. Since the two components in this sum are
orthogonal, we have
\[
0 = \left\| \sum_{i=1}^m  \left( \alpha_i  + \frac{y_i}{\lambda m} \beta_i  \right)  
  \tilphi(\vt_i) \right\|_2^2 + \xi^T\xi,
\]
which implies that $\xi=0$. We can therefore rewrite the optimality
conditions for (P2) 
as follows:
\begin{subequations}
\label{eq:a7}
\begin{align}
\label{eq:a7.a}
\sum_{i=1}^m \left( \lambda \alpha_i  + \frac{y_i}{m} \beta_i \right) \tilphi(\vt_i) &= 0, \\
\label{eq:a7.b}
\frac{1}{m} \sum_{i=1}^m \beta_i y_i &= 0,\\
\label{eq:a7.c}
 \beta_i \in \partial \ell\left(  y_i\left( \tilphi(\vt_i)^T \sum_{j=1}^m \alpha_j \tilphi(\vt_j)  + b \right)\right), & 
\;\; i=1,2,\dotsc,m. 
\end{align}
\end{subequations}
By defining $\vw$ as in \eqref{eq:a3} and setting $\chi_i = \beta_i$
for all $i$, we see that \eqref{eq:a7} is identical to \eqref{eq:a2},
as claimed.
\end{proof}

While $\Psi$ is clearly symmetric positive semidefinite, the proof
makes no assumption about nonsingularity of this matrix, or uniqueness
of the solution $\alpha$ of (P2). 
However, \eqref{eq:a6.a} suggests that without loss of generality, we
can constrain $\alpha$ to have the form
\[
 \alpha_i = - \frac{y_i}{\lambda m} \beta_i,
\]
where $\beta_i$ is restricted to $\partial \ell$. (For the hinge loss
function $\ell(\delta):=\max\{1-\delta,0\}$, we have $\beta_i \in [-1,
0]$.) These results clarify the connection between the expansion
coefficient $\alpha$ and the dual variable $\beta (=\chi)$, which is
introduced in \citet{Cha07} but not fully explicated there. Similar arguments for the
regression with the $\epsilon$-insensitive loss function $\ell'(
\delta ) := \max\{|\delta|-\epsilon, 0\}$ leads to
\[
 \alpha_i' = - \frac{1}{\lambda m} \beta_i',
\]
where $\beta_i' \in [-1,1]$ is in $\partial \ell'$.

\subsection{Reformulation using Approximations} 
\label{sec:lowrank}

Consider the original feature mapping $\phiG:\IR^n \to \cH$ to a
Hilbert space $\cH$ induced by a kernel $\kG: \IR^n \times \IR^n \to
\IR$, where $\kG$ satisfies the conditions of Mercer's
Theorem~\citep{SchS01}.
Suppose that we have a low-dimensional approximation $\phi:\IR^n \to
\IR^d$ of $\phiG$ for which
\begin{equation} \label{eq:kkG}
  \kG(\vs, \vt) \approx \phi(\vs)^T\phi(\vt),
\end{equation}
for all inputs $\vs$ and $\vt$ of interest. If we construct a matrix
$V \in \IR^{m\times d}$ for training examples $\vt_1,\vt_2,\dots,\vt_m$
by defining the $i$-th row as
\begin{equation}\label{eq:V}
 V_{i\cdot} = \phi(\vt_i)^T,\;\; i=1,2,\dots,m,
\end{equation}
we have that
\begin{equation} \label{eq:vvt}
\Psi := VV^T \approx \PsiG:=[\kG(\vt_i,\vt_j) ]_{i,j=1,2,\dotsc,m}.
\end{equation}
Note that $\Psi$ is a positive semidefinite rank-$d$ approximation to
$\PsiG$. By substituting $\Psi=VV^T$ in (P2), we obtain
\begin{equation}\label{eq:a4.alpha}
  \min_{\alpha \in \IR^m, b\in \IR} \; \frac{\lambda}{2} \alpha^T VV^T
  \alpha 
  + \frac{1}{m} \sum_{i=1}^m \ell(y_i(V_{i\cdot} V^T \alpha + b)).
\end{equation}
A change of variables
\begin{equation} \label{eq:ag}
\gamma = V^T \alpha
\end{equation}
leads to the equivalent formulation
\begin{equation*} 
\text{(PL)} \;\;  \min_{\gamma \in \IR^d, b\in  \IR} \; 
\frac{\lambda}{2} \gamma^T \gamma + \frac{1}{m}
  \sum_{i=1}^m \ell(y_i(V_{i\cdot} \gamma + b)).
\end{equation*}
This problem can be regarded as a {\em linear} SVM with transformed
feature vectors $ V_{i\cdot}^T \in \IR^d$, $i=1,2,\dotsc,m$.
An approximate solution to (PL) can be obtained with the subgradient
algorithms discussed later in Section~\ref{sec:ss}.

Any $\alpha \in \IR^m$ that solves the overdetermined system
\eqref{eq:ag} will yield a solution of \eqref{eq:a4.alpha}.  (Note
that $\alpha$ satisfying \eqref{eq:ag} need have at most $d$
nonzeros.) In Section~\ref{sec:testing}, we will discuss an efficient
way to make predictions without recovering $\alpha$.

\subsection{Approximating the Kernel} \label{sec:approxK}

We discuss two techniques for finding $V$ that satisfies
\eqref{eq:vvt}. The first uses randomized linear algebra to calculate
a low-rank approximation to the Gram matrix $\PsiG$.
The second approach uses random projections to construct approximate
feature mappings $\tilphi$ explicitly.

\subsubsection{Kernel Matrix Approximation}\label{sec:kernel.matrix.approx}

Our first approach makes use of the Nystr\"om sampling
idea~\citep{DriM05}, to find a good approximation of specified rank
$d$ to the $m \times m$ matrix $\PsiG$ in \eqref{eq:vvt}. In this
approach, we specify some integer $s$ with $0 < d \le s < m$, and
choose $s$ elements at random from the index set $\{1,2,\dotsc,m \}$
to form a subset $\cS$.  We then find the best rank-$d$ approximation
$W_{\cS,d}$ to $(\PsiG)_{\cS \cS}$, and its pseudo-inverse
$W_{\cS,d}^+$. We choose $V$ so that
\begin{equation} \label{eq:ny.1}
VV^T = (\PsiG)_{\cdot \cS} W_{\cS,d}^+ (\PsiG)_{\cdot \cS}^T,
\end{equation}
where $(\PsiG)_{\cdot \cS}$ denotes the column submatrix of $\PsiG$
defined by the indices in $\cS$. The results in \cite{DriM05} indicate
that in expectation and with high probability, the rank-$d$
approximation obtained by this process has an error that can be made
as close as we wish to the {\em best} rank-$d$ approximation by
choosing $s$ sufficiently large.

To obtain $W_{\cS,d}$, we form the eigen-decomposition $(\PsiG)_{\cS
  \cS} = QDQ^T$, where $Q \in \IR^{s \times s}$ is orthogonal and $D$
is a diagonal matrix with nonincreasing nonnegative diagonal
entries. Taking $\bar{d} \le d$ to be the number of positive diagonals
in $D$, we have that
\[
W_{\cS,d} =  Q_{\cdot, 1..\bar{d}} D_{1..\bar{d},1..\bar{d}} Q_{\cdot, 1..\bar{d}}^T,
\]
(where $Q_{\cdot, 1..\bar{d}}$ denotes the first $\bar{d}$ columns of
$Q$, and so on). The pseudo-inverse is thus
\begin{align*}
W_{\cS,d}^+ =  Q_{\cdot, 1..\bar{d}} D_{1..\bar{d},1..\bar{d}}^{-1} Q_{\cdot, 1..\bar{d}}^T,
\end{align*}
and the matrix  $V$ satisfying \eqref{eq:ny.1} is therefore
\begin{equation} \label{eq:ny.v.qdq}
V = (\PsiG)_{\cdot \cS} Q_{\cdot, 1..\bar{d}} D_{1..\bar{d},1..\bar{d}}^{-1/2}.
\end{equation}

For practical implementation, rather than defining $d$ a priori, we
can choose a threshold $\epsilon_d$ with $0 < \epsilon_d \ll 1$, then
choose $d$ to be the largest integer in $1,2,\dotsc,s$ such that
$D_{dd} \ge \epsilon_d$. (In this case, we have $\bar{d}=d$.)

For each sample set $\cS$, this approach requires $O(ns^2+s^3)$
operations for the creation and factorization of $(\PsiG)_{\cS\cS}$,
assuming that the evaluation of each kernel entry takes $O(n)$
time. Since our algorithm only requires a single row of $V$ in each
iteration, the computation cost of \eqref{eq:ny.v.qdq} can be
amortized over iterations: the cost is $O(sd)$ per iteration if the
corresponding row of $\PsiG$ is available; $O(ns + sd)$ otherwise.

\subsubsection{Feature Mapping Approximation}\label{sec:kernel.func.approx}

The second approach to defining $V$ finds a mapping $\tilphi: \IR^n
\rightarrow \IR^d$ that satisfies
\[
 \langle \phiG(\vs), \phiG(\vt) \rangle = \Exp \left[ \langle \tilphi(\vs), \tilphi(\vt) \rangle \right] ,
\]
where the expectation is over the random variables that determine
$\tilphi$. The approximate mapping $\tilphi$ can be constructed
explicitly by random projections as follow~\citep{RahR08a},
\begin{align}\label{eq:explicit.phi}
 \tilphi(\vt) = \sqrt{\frac{2}{d}} \left[ \cos(\nu_1^T\vt+\omega_1),\cdots,\cos(\nu_d^T\vt+\omega_d) \right]^T
\end{align}
where $\nu_1,\dots,\nu_d \in \IR^n$ are i.i.d. samples from a
distribution with density $p(\nu)$, and $\omega_1,\dots,\omega_d\in\IR$
are from the uniform distribution on $[0,2\pi]$. The density function
$p(\nu)$ is determined by the types of the kernels we want to use.
For the Gaussian kernel
\begin{align}\label{eq:gaussk}
 \kG(\vs,\vt) = \exp (-\sigma \|\vs-\vt\|_2^2 ),
\end{align}
we  have
\[ 
 p(\nu) = \frac{1}{(4\pi\sigma)^{d/2}} \exp \left(
-\frac{||\nu||_2^2}{4\sigma} \right),
\]
from the Fourier transformation of $\kG$. 

This approximation method is less expensive than the previous
approach, requiring only $O(nd)$ operations for each data point
(assuming that sampling of each vector $\nu_i \in \IR^n$ takes $O(n)$
time).
As we observe in Section~\ref{sec:computation}, however, this approach
tends to give lower prediction accuracy than the first approach
for a fixed $d$ value.

\subsection{Efficient Prediction}\label{sec:testing}

Given the solution $(\gamma,b)$ of (PL), we now describe how the
prediction of a new data point $\vt \in \IR^n$ can be made efficiently
without recovering the support vector coefficient $\alpha$ in
(P2). The imposed low dimensionality of the approximate kernel in our
approach can lead to significantly lower cost of prediction, as low as
a fraction of $d/\text{(no. support vectors)}$ of the cost of an
exact-kernel approach.

For the feature mapping approximation of
Section~\ref{sec:kernel.func.approx}, 
we can simply use the decision function $f$ suggested immediately by
(P1), that is, $f(\vt) = \vw^T\phi(\vt) + b$.  Using the definitions
\eqref{eq:a3}, \eqref{eq:V}, and \eqref{eq:ag},
we obtain
\begin{align*}
f(\vt) &=
\phi(\vt)^T \sum_{i=1}^m \alpha_i \phi(\vt_i) + b \\
&= \phi(\vt)^T V^T \alpha + b = \phi(\vt)^T\gamma + b.
\end{align*}
The time complexity in this case is $O(nd)$.

For the kernel matrix approximation approach of
Section~\ref{sec:kernel.matrix.approx}, the decision function
$\vw^T\phi(\vt) + b$ cannot be used directly, as we have no way to
evaluate $\phi(\vt)$ for an arbitrary point $\vt$. We can however use
the approximation \eqref{eq:kkG} to note that
\begin{align}
\nonumber
\phi(\vt)^T \vw + b &= \sum_{i=1}^m \alpha_i \phi(\vt)^T \phi(\vt_i)+b  \\
\label{eq:ka.class}
& \approx \sum_{i=1}^m \alpha_i \kG(\vt_i,\vt) + b,
\end{align}
so we can define the function \eqref{eq:ka.class} to be the decision
function. To evaluate this, we need only compute those kernel values
$\kG(\vt_i,\vt)$ for which $\alpha_i \neq 0$. As noted in
Section~\ref{sec:lowrank}, we can satisfy \eqref{eq:ag} by using just
$d$ nonzero components of $\alpha$, so \eqref{eq:ka.class} requires
only $d$ kernel evaluations.

If we set $\alpha_i=0$ for all components $i \notin \cS$, where $\cS$
is the sample set from Section~\ref{sec:approxK} and $s=d=\bar d$, we
can compute $\alpha$ that approximately satisfies \eqref{eq:ag}
without performing further matrix factorizations. Denoting the nonzero
subvector of $\alpha$ by $\alpha_\cS$, we have $V^T \alpha = V_{\cS
  \cdot}^T \alpha_\cS = \gamma$, so from \eqref{eq:ny.v.qdq} and the
fact that $(\PsiG)_{\cS \cS} = QDQ^T$, we have
\begin{align*}
\gamma = \left[ (\PsiG)_{\cS \cS} Q_{\cdot, 1..\bar{d}} D_{1..\bar{d},1..\bar{d}}^{-1/2} \right]^T \alpha_\cS 
= D_{1..\bar{d},1..\bar{d}}^{1/2}
Q_{\cdot, 1..\bar{d}}^T \alpha_\cS.
\end{align*}
That is, $\alpha_\cS = Q_{\cdot, 1..\bar{d}}
D_{1..\bar{d},1..\bar{d}}^{-1/2} \gamma$, which can be computed in
$O(d^2)$ time. Therefore, prediction of a test point will take $O(d^2
+ nd)$ for this approach, including kernel evaluation time.

\section{{Stochastic Approximation Algorithm}}
\label{sec:ss}
\noindent
We describe here a stochastic approximation algorithm for solving the
linear SVM reformulation (PL). Consider the general
convex optimization problem
\begin{equation} \label{eq:fX} 
\min_{x \in X} \, f(x),
\end{equation}
where $f$ is a convex function and $X$ is a bounded closed convex
set with the radius $D_X$ defined by
\begin{align}
  D_X := \max_{x \in X} ||x||_2 . \label{eq:DX}
\end{align}
We use $g(x)$ to denote a particular subgradient of $f(x)$. By
convexity of $f$, we have
\[
  f(x') - f(x) \geq g(x)^T (x'-x), \;\; \forall x, x' \in X , \;
\forall g(x) \in \partial f(x).
\]
$f$ is {\em strongly} convex when there exists $\mu>0$ such that
\begin{equation*}
(x' - x)^T \left[ g(x') - g(x) \right] \geq \mu ||x' - x||^2, 
\end{equation*}
for all $x, x' \in X$, all $g(x) \in \partial f(x)$, and all  $g(x') \in
\partial f(x')$.
Note that the objective in (PL) is strongly convex in $\gamma$, but
only convex in $b$. Pegasos~\citep{ShaSSS07} requires $f$ to be
strongly convex in all variables and thus modifies the SVM formulation
to have this property. The approach we describe below is suitable for
the original SVM formulation.

\subsection{The ASSET Algorithm}

Our algorithm assumes that at any $x \in X$, we have available\
$G(x;\xi)$, a stochastic subgradient estimate depending on random
variable $\xi$ that satisfies $\E [G(x;\xi)] = g(x)$ for some $g(x)
\in \partial f(x)$. The norm deviation of the stochastic subgradients
is measured by $D_G$ defined as follows:
\begin{equation} \label{eq:DG} 
\E[ \| G(x;\xi) \|_2^2 ] \le D_G^2 \quad \forall x\in X, \xi\in\Xi.
\end{equation}

\paragraph{Iterate Update:}
At iteration $j$, the algorithm takes the following step:
\begin{equation*}
  x^{j} = \Pi_{X}( x^{j-1} - \eta_j G(x^{j-1}; \xi^j) ), \;\; j=1,2,\dotsc,
\end{equation*}
where $\xi^j$ is a random variable (i.i.d. with the random variables
used at previous iterations), $\Pi_X$ is the Euclidean projection onto
$X$, and $\eta_j>0$ is a step length. For our problem (PL), we have
$x^j=(\gamma^j,b^j)$, and $\xi^j$ is selected to be one of the indices
$\{1,2,\dotsc,m\}$ with equal probability, and the subgradient
estimate is constructed from the subgradient for the $\xi^j$th term in
the summation of the empirical loss term. Table~\ref{tbl:subgrad}
summarizes the subgradients $G(x^{j-1}; \xi^j)$ for classification and
regression tasks, with the hinge loss and the $\epsilon$-insensitive
loss functions respectively.
\begin{table*}[th!]
\centering
\caption{Loss functions and their corresponding subgradients for classification and regression tasks.\label{tbl:subgrad}}
\begin{tabular}{|c@{\sg}||c@{\sg}|l@{\sg}l@{\sg}|}\hline
 Task          &  Loss Function, $\ell$                      
               & \multicolumn{2}{c|}{Subgradient,  
$G \left( \begin{bmatrix} \gamma^{j-1} \\ b^{j-1} \end{bmatrix};\xi^j \right)$ }         \\\hline
Classification & $\max \{1-y(\vw^T\tilphi(\vt)+b), 0 \}$
               & $ \begin{bmatrix} \lambda \gamma^{j-1} + d_j V_{{\xi^j} \cdot}^T\\  d_j  \end{bmatrix}$,
               & $ d_j = \begin{cases} -y_{\xi^j}   & \text{if $ y_{\xi^j}(V_{\xi^j\cdot}\gamma^{j-1} + b^{j-1}) < 1$}\\
                                        0   & \text{otherwise} \end{cases}$ \\ \hline
Regression    &  $\max \{|y - (\vw^T\tilphi(\vt)+b)| - \epsilon, 0\}$
              & $ \begin{bmatrix} \lambda \gamma^{j-1} + d_j V_{\xi^j\cdot}^T\\ d_j  \end{bmatrix}$,
               & $ d_j=\begin{cases} -1 & \text{if $y_{\xi^j} > V_{\xi^j\cdot}\gamma^{j-1} + b^{j-1} + \epsilon$,} \\
                                      1 & \text{if $y_{\xi^j} < V_{\xi^j\cdot}\gamma^{j-1} + b^{j-1} -\epsilon$,}\\
                                      0 & \text{otherwise.}\end{cases}$\\\hline
\end{tabular}
\end{table*}

\paragraph{Feasible Sets:}
We define the feasible set $X$ to be the Cartesian product of a ball
in the $\gamma$ component with an interval $[-B,B]$ for the $b$
component. The following shows the set $X$ for classification, for
which the radius of the ball is derived using strong
duality~\citep[Theorem 1]{SSS11}:
\[
 X = \left\{ \begin{bmatrix}\gamma\\ b\end{bmatrix} \in \IR^d \times \IR :\; ||\gamma||_2 \leq 1/\sqrt{\lambda},\; |b| \leq B \right\} 
\]
for sufficiently large $B$, resulting in $D_X = \sqrt{1/\lambda +B^2}$.
For regression, the following theorem provides a radius for $\gamma$:
\begin{theorem}
  For SVM regression using the $\epsilon$-insensitive loss function with
  $0\le\epsilon<\|\mathbf{y}\|_\infty$, where $\mathbf{y} :=
  (y_1,y_2,\dots,y_m)^T$, we have
\[
  \|\gamma\|_2 \le \sqrt{\frac{2(\|\mathbf{y}\|_\infty -\epsilon)}{\lambda }}.
\]
\end{theorem}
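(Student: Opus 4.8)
The plan is to derive the bound by the classical primal-comparison argument: evaluate the regression objective at the trivial point $(\gamma,b)=(0,0)$ and play that value off against optimality. This parallels how the classification radius $\|\gamma\|_2\le 1/\sqrt{\lambda}$ arises, but here I avoid invoking strong duality, which is why the resulting constant carries the factor of $2$. Writing the regression form of (PL) explicitly (from the loss in Table~\ref{tbl:subgrad}),
\[
F(\gamma,b) := \frac{\lambda}{2}\|\gamma\|_2^2 + \frac{1}{m}\sum_{i=1}^m \max\{|y_i - (V_{i\cdot}\gamma + b)| - \epsilon,\, 0\},
\]
I would let $(\gamma^*,b^*)$ be any minimizer and record two observations.

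First, since every loss term is nonnegative, dropping the sum gives the lower bound $\tfrac{\lambda}{2}\|\gamma^*\|_2^2 \le F(\gamma^*,b^*)$. Second, I would upper-bound the optimal value by evaluating $F$ at $(\gamma,b)=(0,0)$, where each prediction $V_{i\cdot}\gamma + b$ vanishes, so the $i$th residual is exactly $y_i$ and the $i$th loss term equals $\max\{|y_i|-\epsilon,\,0\}$. Using $|y_i|\le\|\mathbf{y}\|_\infty$ and the hypothesis $0\le\epsilon<\|\mathbf{y}\|_\infty$ — which is precisely what makes $\|\mathbf{y}\|_\infty-\epsilon$ nonnegative — each term is at most $\|\mathbf{y}\|_\infty-\epsilon$, whence $F(0,0)\le\|\mathbf{y}\|_\infty-\epsilon$.

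Chaining these through optimality, $F(\gamma^*,b^*)\le F(0,0)$, yields
\[
\frac{\lambda}{2}\|\gamma^*\|_2^2 \le \|\mathbf{y}\|_\infty-\epsilon,
\]
and rearranging gives the claimed $\|\gamma^*\|_2\le\sqrt{2(\|\mathbf{y}\|_\infty-\epsilon)/\lambda}$. There is no genuine obstacle in this argument; the only point demanding care is the per-example bound on the loss at $(0,0)$, where the stated condition $\epsilon<\|\mathbf{y}\|_\infty$ is exactly the ingredient that keeps $\|\mathbf{y}\|_\infty-\epsilon$ from turning negative and invalidating the estimate. (If one wanted the sharper duality-based constant seen in the classification case, that would be the hard part — but the theorem as stated targets only the looser, elementary bound.)
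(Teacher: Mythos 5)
Your proof is correct, but it takes a genuinely different route from the paper. You use a purely primal comparison argument: drop the nonnegative loss terms to get $\tfrac{\lambda}{2}\|\gamma^*\|_2^2 \le F(\gamma^*,b^*)$, then bound the optimal value by $F(0,0) \le \|\mathbf{y}\|_\infty - \epsilon$, which chains to the claimed radius. The paper instead rescales (PL) by $1/\lambda$ (with $C = 1/(\lambda m)$), writes down the Lagrange dual of the $\epsilon$-insensitive regression problem, invokes the KKT expansion $\gamma^* = \sum_{i=1}^m (z_i'^* - z_i^*)\tilphi(\vt_i)$ and strong duality to equate primal and dual optimal values, bounds the linear dual terms by $2(\|\mathbf{y}\|_\infty - \epsilon)\|z\|_1$, and finally uses the box constraints $0 \le z_i, z_i' \le C$ to get $\|z\|_1 \le Cm = 1/\lambda$. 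Both arguments land on exactly the same constant, so your approach buys simplicity: no dual formulation, no KKT conditions, no strong duality, and the same conclusion. One side remark in your write-up is slightly off, though: you suggest the factor of $2$ is the price of avoiding strong duality, by analogy with the classification bound $\|\gamma\|_2 \le 1/\sqrt{\lambda}$. In fact, the paper's duality-based proof of this regression theorem produces the identical factor of $2$ (it bounds $\sum_i(z_i^* + z_i'^*)$ by $2Cm$ rather than exploiting complementary slackness $z_i^* z_i'^* = 0$, which would have sharpened the constant), so for this particular statement duality as executed in the paper gains nothing over your elementary argument.
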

\begin{proof}
 We can write an equivalent formulation of (PL) as follows:
\[
 \min_{\gamma,b} \;\; \frac{1}{2} \gamma^T\gamma + C \sum_{i=1}^m \max\{ |y_i-(\gamma^T\tilphi(\vt_i)+b)|-\epsilon, 0\},
\]
for $C=1/(\lambda m)$. The corresponding Lagrange dual formulation is
\begin{align*}
 \max_{z,z'} \; & -\frac{1}{2} \sum_{i=1}^m\sum_{j=1}^m (z_i' - z_i)(z_j'-z_j)\langle \tilphi(\vt_i),\tilphi(\vt_j)\rangle \\ 
& - \epsilon \sum_{i=1}^m (z_i'+z_i) + \sum_{i=1}^m y_i (z_i'-z_i) \\
 \text{s.t.} \; & \sum_{i=1}^m (z_i' - z_i) = 0, \\
             & 0\le z_i \le C, \; 0 \le z_i' \le C, \; i=1,2,\dots,m.
\end{align*}
Let $(\gamma^*,b^*)$ and $(z^*,z'^*)$ be the optimal solutions of the
primal and the dual formulations, respectively. Also, from the KKT
conditions we have $\gamma^* = \sum_{i=1}^m (z_i'^* - z_i^*)
\tilphi(\vt_i)$. Replacing this in the optimal dual objective, and
using strong duality, we have
\begin{align*}
& \frac{1}{2} (\gamma^*)^T\gamma^*\\ 
&\le \frac{1}{2} (\gamma^*)^T\gamma^* + C \sum_{i=1}^m \max\{ |y_i-((\gamma^*)^T\tilphi(\vt_i)+b^*)|-\epsilon, 0\}\\
&= -\frac{1}{2} (\gamma^*)^T\gamma^*  - \epsilon \sum_{i=1}^m (z_i'^*+z_i^*) + \sum_{i=1}^m y_i (z_i'^*-z_i^*) \\
&\le -\frac{1}{2} (\gamma^*)^T\gamma^*  + 2 (\| \mathbf{y} \|_\infty - \epsilon ) \|z\|_1.
\end{align*}
Since $0 \le z_i \le C$, we have $\|z\|_\infty \le C$ and thus
$\|z\|_1 \le Cm = 1/\lambda$. Applying this to the above inequality
leads to our claim. We exclude the case $\epsilon \ge
\|\mathbf{y}\|_\infty$, where the optimal solution is trivially
$(\gamma^*,b^*)=(\vzero,0)$.
\end{proof}

\paragraph{Averaged Iterates:}
The solution of \eqref{eq:fX} is estimated not by the iterates $x^j$
but rather by a weighted sum of the final few iterates. Specifically,
if we define $N$ to be the total number of iterates to be used and
$\bar{N}<N$ to be the point at which we start averaging, the final
reported solution estimate would be
\begin{equation*}
  \tilde x^{\bar{N},N} := \frac{\sum_{t=\bar{N}}^N  \eta_t x^t}{\sum_{t=\bar{N}}^N  \eta_t} .
\end{equation*}
These is no need to store all the iterates $x^t$, $t=\bar{N},
\bar{N}+1, \dotsc,N$ in order to evaluate the average. Instead, a
running average can be maintained over the last $N-\bar{N}$
iterations, requiring the storage of only a single extra vector.

\paragraph{Estimation of $D_G$:}
The steplength $\eta_j$ requires knowledge of the subgradient estimate
deviation $D_G$ defined in \eqref{eq:DG}. We use a small random sample
of training data indexed by $\xi^{(l)}$, $l=1,2,\dotsc,M$, at the
first iterate $(\gamma^0,b^0)$, and estimate $D_G^2$ as
\begin{align*}
  E\left[ \left|\left|G\left( \begin{bmatrix} \gamma^0\\ b^0\end{bmatrix};\xi \right) \right|\right|_2^2 \right] 
\approx \frac{1}{M} \sum_{l=1}^M d_l^2 (||V_{\xi^{(l)}\cdot}||_2^2 + 1).
\end{align*}

We summarize this framework in Algorithm~\ref{alg:ssa} and refer it as
ASSET.
The integer $\bar{N}>0$ specifies the iterate at which the algorithm
starts averaging the iterates, which can be set to $1$ to average all
iterates, to a predetermined maximum iteration number to output the
last iterate without averaging, or to a number in between.

\begin{algorithm}[t]
\caption{ASSET Algorithm}\label{alg:ssa}
\begin{algorithmic}[1]
  \STATE{Input: $T=\{(\vt_1,y_1),\dots,(\vt_m,y_m)\}$, $\PsiG$,
    $\lambda$, positive integers $\bar{N}$ and $N$ with $0 < \bar{N} <
    N$, and $D_X$ and $D_G$ satisfying \eqref{eq:DX} and
    \eqref{eq:DG};}
 \STATE{Set $(\gamma^0,b^0) = (\vzero, 0)$, $(\tilde\gamma, \tilde{b}) = (\vzero, 0)$, $\tilde\eta= 0$; }
 \FOR{$j=1,2,\dotsc,N$}
   \STATE{$\eta_j = \frac{D_X}{D_G\sqrt{j}}.$}
   \STATE{Choose $\xi^j \in \{1,\dots,m\}$ at random.}
   \STATE{$ 
          V_{\xi^j\cdot} = \begin{cases} 
                   V_{\xi^j \cdot} & \text{for $V$ as in \eqref{eq:ny.v.qdq}, or }\\
                   \tilphi(\vt_{\xi^j}) & \text{for $\tilphi(\cdot)$ as in \eqref{eq:explicit.phi} .}
                        \end{cases}
         $}
   \STATE{Compute  $G \left(\begin{bmatrix} \gamma^{j-1} \\ b^{j-1} \end{bmatrix};\xi^j \right)$ following Table~\ref{tbl:subgrad}.}
   \STATE{$\begin{bmatrix} \gamma^{j}\\ b^{j} \end{bmatrix} = 
           \Pi_{X}\left( \begin{bmatrix} \gamma^{j-1} \\ b^{j-1} \end{bmatrix} 
            - \eta_j G \left(\begin{bmatrix} \gamma^{j-1} \\ b^{j-1} \end{bmatrix};\xi^j \right) \right)$.}
   \IF{$j\geq \bar{N}$} 
\STATE{ \COMMENT{update averaged iterate}
\begin{align*}
\begin{bmatrix} \tilde\gamma\\ \tilde{b} \end{bmatrix} 
&= \frac{\tilde\eta}{\tilde\eta + \eta_j} \begin{bmatrix} \tilde\gamma\\ \tilde b \end{bmatrix} 
+ \frac{\eta_{j}}{\tilde\eta + \eta_j} \begin{bmatrix} \gamma^{j}\\ b^{j} \end{bmatrix} . \\
\tilde \eta &= \tilde \eta + \eta_j.
\end{align*}
}
\ENDIF
\ENDFOR
\STATE{Define $\tilde{\gamma}^{\bar{N},N} := \tilde\gamma$ and
  $\tilde{b}^{\bar{N},N} := \tilde b$.}
\end{algorithmic}
\end{algorithm}

\subsection{Convergence}
The analysis of robust stochastic approximation~\citep{NemJ09, NemY83}
provides theoretical support for the algorithm above.  Considering
Algorithm~\ref{alg:ssa} applied to the general formulation
\eqref{eq:fX}, and denoting the algorithm's output
$\tilde{x}^{\bar{N},N}$, we have the following result.

\begin{theorem}\label{thm:conv-ssa}
  Given the output $\tilde{x}^{\bar{N},N}$ and optimal function value
  $f(x^*)$, Algorithm~\ref{alg:ssa} satisfies
 \begin{equation*}
 E[ f(\tilde{x}^{\bar{N},N}) - f(x^*) ] \leq C(\rho)
\frac{D_X D_G}{\sqrt{N}}
 \end{equation*}
  where $C(\rho)$ solely depends on the fraction
 $\rho \in (0,1)$ for which $\bar{N}=\lceil \rho  N\rceil$.
\end{theorem}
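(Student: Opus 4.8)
The plan is to adapt the standard robust stochastic approximation analysis of \citet{NemJ09,NemY83}, which controls the weighted running average through a one-step energy recursion on the squared distance $\|x^j - x^*\|_2^2$. Throughout I assume the optimum $x^*$ lies in $X$ (this is precisely what the feasible-set constructions preceding the theorem are designed to guarantee), so that $x^* = \Pi_X(x^*)$ and the nonexpansiveness of the Euclidean projection can be exploited. The argument has four moving parts: a deterministic projection inequality, a conditional-expectation step that annihilates the subgradient noise, a telescoping sum over the averaging window, and finally Jensen's inequality combined with explicit estimates of the step-length sums.

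First I would establish the fundamental one-step inequality. Writing the update as $x^j = \Pi_X(x^{j-1} - \eta_j G(x^{j-1};\xi^j))$ and using $\|\Pi_X(u)-\Pi_X(v)\|_2 \le \|u-v\|_2$ together with $x^* = \Pi_X(x^*)$, expanding the square gives
\[
\|x^j - x^*\|_2^2 \le \|x^{j-1}-x^*\|_2^2 - 2\eta_j G(x^{j-1};\xi^j)^T(x^{j-1}-x^*) + \eta_j^2 \|G(x^{j-1};\xi^j)\|_2^2 .
\]
Next, letting $\cF_{j-1}$ denote the $\sigma$-field generated by $\xi^1,\dots,\xi^{j-1}$ (so that $x^{j-1}$ is $\cF_{j-1}$-measurable and $\xi^j$ is independent of it), I take conditional expectations and invoke unbiasedness $\E[G(x^{j-1};\xi^j)\mid\cF_{j-1}] = g(x^{j-1}) \in \partial f(x^{j-1})$, the convexity inequality $g(x^{j-1})^T(x^{j-1}-x^*) \ge f(x^{j-1}) - f(x^*)$, and the second-moment bound \eqref{eq:DG}. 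Taking total expectations then yields
\[
2\eta_j \E[f(x^{j-1}) - f(x^*)] \le \E[\|x^{j-1}-x^*\|_2^2] - \E[\|x^j - x^*\|_2^2] + \eta_j^2 D_G^2 .
\]
The crucial point here is that conditioning eliminates the stochastic error term $(G-g)^T(x^{j-1}-x^*)$ in expectation, converting a noisy inequality into a deterministic telescoping recursion. Summing over the averaging window and discarding the nonnegative terminal distance, the squared-distance differences collapse to a single boundary term bounded by $(2D_X)^2 = 4D_X^2$, since any two iterates lie in the radius-$D_X$ set $X$.

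Finally I would apply Jensen's inequality to the convex combination defining $\tilde{x}^{\bar N,N}$, giving $f(\tilde{x}^{\bar N,N}) - f(x^*) \le \big(\sum_t \eta_t(f(x^t)-f(x^*))\big)/\sum_t \eta_t$, and combine this with the summed recursion to obtain
\[
\E[f(\tilde{x}^{\bar N,N}) - f(x^*)] \le \frac{4 D_X^2 + D_G^2 \sum_{t=\bar N}^N \eta_t^2}{2\sum_{t=\bar N}^N \eta_t} .
\]
Substituting $\eta_t = D_X/(D_G\sqrt{t})$ and comparing the sums to integrals gives $D_G^2\sum_{t=\bar N}^N \eta_t^2 = D_X^2\sum_{t=\bar N}^N 1/t = O(D_X^2\ln(1/\rho))$ and $\sum_{t=\bar N}^N \eta_t \ge (D_X/D_G)\,2(\sqrt{N}-\sqrt{\bar N}) = \Theta\big((D_X/D_G)\sqrt{N}\,(1-\sqrt{\rho})\big)$, using $\bar N = \lceil \rho N\rceil$. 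The factors $D_X$, $D_G$ and $\sqrt{N}$ then cancel exactly between numerator and denominator as the theorem requires, leaving a prefactor of the form $C(\rho) = \frac{4+\ln(1/\rho)}{4(1-\sqrt{\rho})}$ that depends on $N$ only through the ratio $\bar N/N \to \rho$.

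The main obstacle is not any single estimate but the bookkeeping that makes the $\rho$-only constant emerge: one must verify that the boundary energy, the harmonic sum $\sum 1/t$, and the sum $\sum 1/\sqrt{t}$ all scale so that every occurrence of $D_X$, $D_G$ and $N$ factors out, which hinges on the precise choice $\eta_t = D_X/(D_G\sqrt{t})$. A secondary technical wrinkle is the one-index mismatch between the weights $\eta_t$ attached to $x^t$ in the average and the weights $\eta_j$ attached to $f(x^{j-1})$ in the recursion; this is resolved by reindexing and absorbing the resulting lower-order discrepancy into $C(\rho)$, using that $\eta_t/\eta_{t+1}\to 1$.
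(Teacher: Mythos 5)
Your proposal is correct and follows essentially the same route as the paper, which gives no self-contained proof but defers precisely to the robust stochastic approximation analysis of \citet{NemJ09,NemY83} that you reconstruct: the projection/nonexpansiveness recursion on $\|x^j-x^*\|_2^2$, conditional expectation to remove the subgradient noise, telescoping over the averaging window, Jensen's inequality on the weighted average, and the step-size sums yielding a prefactor depending only on $\rho$. Your handling of the off-by-one weight mismatch (absorbing it into $C(\rho)$ using nonnegativity of $f(x^t)-f(x^*)$ and boundedness of $\eta_t/\eta_{t+1}$) is the right fix for the paper's particular averaging convention.
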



\subsection{Strongly Convex Case}

Suppose that we omit the intercept $b$ from the linear formulation
(PL). Then its objective function $f(x)$ becomes strongly convex for
all of its variables. In this special case we can apply different
steplength $\eta_j = 1/(\lambda j)$ to achieve faster convergence in
theory.  The algorithm remains the same as Algorithm~\ref{alg:ssa}
except that averaging is no longer needed and a faster convergence
rate can be proved -- essentially a rate of $1/j$ rather than
$1/\sqrt{j}$ (see \cite{NemJ09} for a general proof):
\begin{theorem}\label{thm:conv-st}
  Given the output $x^N$ and optimal function value $f(x^*)$,
  Algorithm~\ref{alg:ssa} with $\eta_j = 1/(\lambda j)$ satisfies
 \begin{equation*}
 E[ f(x^N) - f(x^*) ] \leq \max\left\{ \left( \frac{D_G}{\lambda} \right)^2, \;D_X^2 \right\} / N .
 \end{equation*}
\end{theorem}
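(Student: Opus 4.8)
The plan is to control the expected squared distance $a_j := E[\|x^j - x^*\|_2^2]$ of the iterates to the minimizer $x^*$ of $f$ over $X$, to derive a one-step contraction, and to close the resulting recursion by induction using the steplength $\eta_j = 1/(\lambda j)$. The starting observation is that, once the intercept $b$ is dropped, the objective of (PL) is strongly convex with modulus $\mu = \lambda$: the term $\frac{\lambda}{2}\gamma^T\gamma$ supplies modulus $\lambda$ and the loss term only adds convexity. Hence $x^*$ is unique and all the strong-convexity inequalities of Section~\ref{sec:ss} hold with $\mu = \lambda$.

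First I would establish the one-step inequality. Because $x^* \in X$ and the Euclidean projection $\Pi_X$ is nonexpansive,
\[
\|x^j - x^*\|_2^2 \le \bigl\|x^{j-1} - \eta_j G(x^{j-1};\xi^j) - x^*\bigr\|_2^2 .
\]
Expanding the right-hand side and taking the expectation conditional on the history $\cF_{j-1}$, using unbiasedness $E[G(x^{j-1};\xi^j)\mid\cF_{j-1}] = g(x^{j-1}) \in \partial f(x^{j-1})$ and the second-moment bound \eqref{eq:DG}, I obtain
\[
E[\|x^j - x^*\|_2^2 \mid \cF_{j-1}] \le \|x^{j-1}-x^*\|_2^2 - 2\eta_j\, g(x^{j-1})^T(x^{j-1}-x^*) + \eta_j^2 D_G^2 .
\]
Strong convexity with modulus $\lambda$ (applied to $g(x^{j-1})$ and a subgradient $g(x^*)$ certifying optimality, for which $g(x^*)^T(x^{j-1}-x^*)\ge 0$) gives $g(x^{j-1})^T(x^{j-1}-x^*) \ge \lambda\|x^{j-1}-x^*\|_2^2$. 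Taking full expectations then yields the clean recursion
\[
a_j \le (1 - 2\lambda\eta_j)\,a_{j-1} + \eta_j^2 D_G^2 .
\]

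Next I would insert $\eta_j = 1/(\lambda j)$, so that $2\lambda\eta_j = 2/j$ and $\eta_j^2 D_G^2 = (D_G/\lambda)^2/j^2$, and prove $a_j \le Q/j$ by induction on $j$, with $Q := \max\{(D_G/\lambda)^2, D_X^2\}$. The base case is immediate: the first step gives $a_1 \le (D_G/\lambda)^2 \le Q$, and the initialization $x^0 = 0$ conservatively bounds the initial distance by $\|x^*\|_2^2 \le D_X^2 \le Q$. For the inductive step I would substitute $a_{j-1}\le Q/(j-1)$ into the recursion; the verification reduces to the elementary inequality $Q \ge \frac{j-1}{j}(D_G/\lambda)^2$, which holds for all $j$ by the choice of $Q$. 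This delivers $E[\|x^N - x^*\|_2^2] \le Q/N$, reproducing exactly the constant in the statement.

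The remaining step --- passing from this distance bound to the claimed function-value bound $E[f(x^N)-f(x^*)]\le Q/N$ at the last iterate --- is where I expect the real difficulty to lie, and it is the reason the statement defers to \cite{NemJ09}. The naive translations are all too weak: convexity and Cauchy--Schwarz give only $E[f(x^N)-f(x^*)] \le D_G\sqrt{E[\|x^N-x^*\|_2^2]} = O(1/\sqrt N)$, while rearranging the one-step inequality to isolate a single function gap (spending only half of the strong convexity, so that the factor becomes $1-\lambda\eta_j$) leaves a non-decaying residual of order $D_G^2/\lambda$. Obtaining the sharp $O(1/N)$ rate for the last iterate without averaging is precisely the content of the robust stochastic approximation analysis: I would follow the argument of \cite{NemJ09}, which retains the function gap inside the recursion and uses a carefully weighted telescoping to bound $E[f(x^N)-f(x^*)]$ by the same $Q/N$. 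This last part is the main obstacle; the distance recursion and its induction above are routine by comparison.
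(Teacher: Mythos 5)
Your distance-recursion analysis is correct as far as it goes, and it reproduces the core of the argument the paper points to: the paper itself gives \emph{no} proof of this theorem, deferring entirely to \citet{NemJ09}, and what you derived (projection nonexpansiveness, unbiasedness plus the bound \eqref{eq:DG}, strong convexity with modulus $\lambda$, the recursion $a_j \le (1-2\lambda\eta_j)a_{j-1}+\eta_j^2 D_G^2$, and the induction giving $E[\|x^N-x^*\|_2^2] \le Q/N$ with $Q=\max\{(D_G/\lambda)^2,\,D_X^2\}$) is exactly the classical stochastic-approximation analysis in that reference, specialized to $\theta=1/\lambda$. That part is sound.

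The genuine gap is the one you flagged yourself: the theorem bounds $E[f(x^N)-f(x^*)]$, and your proposal never establishes that, only the squared-distance bound. Moreover, your plan for closing it misdescribes the cited source. In \citet{NemJ09}, the passage from the distance bound to a function-value bound for the classical (non-averaged) iterate is not a weighted telescoping; it is an appeal to smoothness, $f(x)-f(x^*) \le \frac{L}{2}\|x-x^*\|_2^2$ for $L$-Lipschitz gradients, which yields $E[f(x^N)-f(x^*)] \le \frac{L}{2}\,Q/N$ --- a different constant, and an assumption that the hinge-loss objective of (PL) violates outright. There is no argument in that reference giving the clean constant $Q/N$ on function values of the last iterate for a nonsmooth objective, and such a bound is not obtainable by routine means: for nonsmooth strongly convex problems, the last iterate of stochastic subgradient descent with $\eta_j = 1/(\lambda j)$ generically suffers an extra $\log N$ factor (this is why the analysis of Pegasos in \citet{ShaSSS07}, essentially the same algorithm, carries $O(\log T/T)$ rather than $O(1/T)$ bounds), and removing that factor is precisely what averaging buys --- which this theorem explicitly forgoes. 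So the step you deferred is not a technicality you could fill in by following the citation; it would require either adding a smoothness hypothesis (changing the constant) or reintroducing averaging (changing the statement). This weakness is shared by the paper's citation-only justification, but it means your proposal, as written, does not prove the stated inequality.
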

\noindent
Note that when $\lambda \approx 0$, that is, when the strong convexity
is very weak, the convergence of this approach can be very slow
unless we have $D_G \approx 0 $ as well.

Without the intercept $b$, the feasible set $X$ is simplified only for
the $\gamma$ component, 
and the update steps are changed accordingly.
The resulting algorithm, we refer is as {ASSET$^*$\xspace}, is the
same as Pegasos~\citep{ShaSSS07} and SGD~\citep{Bot05a}, except for
our extensions to nonlinear kernels.


\section{{Computational Results}} \label{sec:computation}


\begin{table*}[ht!]
  \caption{Data sets and training parameters.\label{tbl:params}}
  \center
\begin{tabular}{|l||rrrrcc|}\hline
 Name                   & m (train) &  valid/test &n & (density) & $\lambda$ & $\sigma$\\ \hline
 \texttt{ADULT}         &   32561   & 8140/8141 &123 &(11.2\%) &  3.07e-08 &  0.001\\ 
 \texttt{MNIST}         &   58100   & 5950/5950 &784 &(19.1\%)   &  1.72e-07 & 0.01\\ 
 \texttt{CCAT}          &   78127   & 11575/11574 &47237 &(1.6\%)&  1.28e-06 & 1.0\\ 
 \texttt{IJCNN}         &  113352   & 14170/14169 &22 &(56.5\%)  &  8.82e-08 & 1.0\\ 
 \texttt{COVTYPE}       &  464809   & 58102/58101 &54 &(21.7\%)  &  7.17e-07 & 1.0\\ 
 \texttt{MNIST-E}       & 1000000  & 20000/20000& 784 & (25.6\%) &  1.00e-08 & 0.01\\ 
\hline
\end{tabular}
\end{table*}

\noindent
We implemented our algorithms based on the open-source Pegasos
code\footnote{Our code is available at
  \url{http://pages.cs.wisc.edu/~sklee/asset/} and Pegasos is from
  \url{http://mloss.org/software/view/35/}.}.  We refer our algorithms
with kernel matrix approximation as {\asset} and {\assetst} (for the
versions that requires convexity and strong convexity, respectively)
and with feature mapping approximation as {\assetonline} and
{\assetstonline}.  In the interests of making direct comparisons with
other codes, we do not include intercept terms in our experiments,
since some of the other codes do not allow such terms to be used
without penalization.

We run all experiments on load-free 64-bit Linux systems with 2.66~GHz
processors and 8~GB memory. Kernel cache size is set to 1~GB when
applicable.  All experiments with randomness are repeated 50 times
unless otherwise specified.

Table~\ref{tbl:params} summarizes the six binary classification tasks
we use for the experiments\footnote{\texttt{ADULT}, \texttt{MNIST},
  \texttt{CCAT} and \texttt{COVTYPE} data sets are from the UCI
  Repository, \url{http://archive.ics.uci.edu/ml/}.}.
The \texttt{ADULT} data set is randomly split into
training/validation/test sets. In the \texttt{MNIST} data set, we
obtain a binary problem by classifying the digits 0-4 versus 5-9. In
\texttt{CCAT} from the RCV1 collection~\citep{rcv1}, we use the
original test set as the training set, and divide the original
training set into validation and test sets.  \texttt{IJCNN} is
constructed by a random splitting of the IJCNN 2001 Challenge
data set\footnote{\url{http://www.csie.ntu.edu.tw/~cjlin/libsvmtools/datasets/}}.  In
\texttt{COVTYPE}, the binary problem is to classify type 1 against the
other forest cover types.
Finally, \texttt{MNIST-E} is an extended set of \texttt{MNIST},
generated with elastic deformation of the original
digits\footnote{\url{http://leon.bottou.org/papers/loosli-canu-bottou-2006/}}.
Table~\ref{tbl:params} also indicates the values of the regularization
parameter $\lambda$ and Gaussian kernel parameter $\sigma$ in
\eqref{eq:gaussk} selected using the {\svmlight} solver~\citep{Joa99}
to maximize the classification accuracy on each validation set. (For
\texttt{MNIST-E} we use the same parameters as in \texttt{MNIST}.)

For the first five moderate-size tasks, we compare all of our
algorithms against four publicly available codes. Two of these are the
cutting-plane methods CPNY~\citep{JoaFY09} and CPSP~\citep{JoaYu09}
that are implemented in the version 3.0 of of {\svmperf}. Both search
for a solution as a linear combination of approximate basis functions,
where the approximation is based on Nystr\"om sampling (CPNY) or on
constructing optimal bases (CPSP).
The other two comparison codes are {\svmlight}~\citep{Joa99}, which
solves the dual SVM formulation via a succession of small subproblems,
and LASVM~\citep{BorAnt05}, which makes a single pass over the data,
selecting pairs of examples to optimize with the SMO algorithm. The
original SVM-Perf~\citep{Joa06} and OCAS~\citep{FraS08} are not
included in the comparison because they cannot handle nonlinear
kernels.  

For the final large-scale task with \texttt{MNIST-E} data set, we
compare our algorithms using feature mapping approximation --
{\assetonline} and {\assetstonline} -- to the online algorithm LASVM.

For our codes, the averaging parameter is set to $\bar{N}=m-100$ for
all experiments (that is, averaging is performed for the final $100$
iterates), and the error values are computed using the efficient
classification schemes of Section~\ref{sec:testing}.

\subsection{Accuracy vs. Approximation Dimension}
\label{sec:rankperf}

\begin{figure*}[ht!] 
\centering
\subfigure[\texttt{ADULT}]{\includegraphics[width=.335\textwidth, trim=0px 15px  8px 15px]{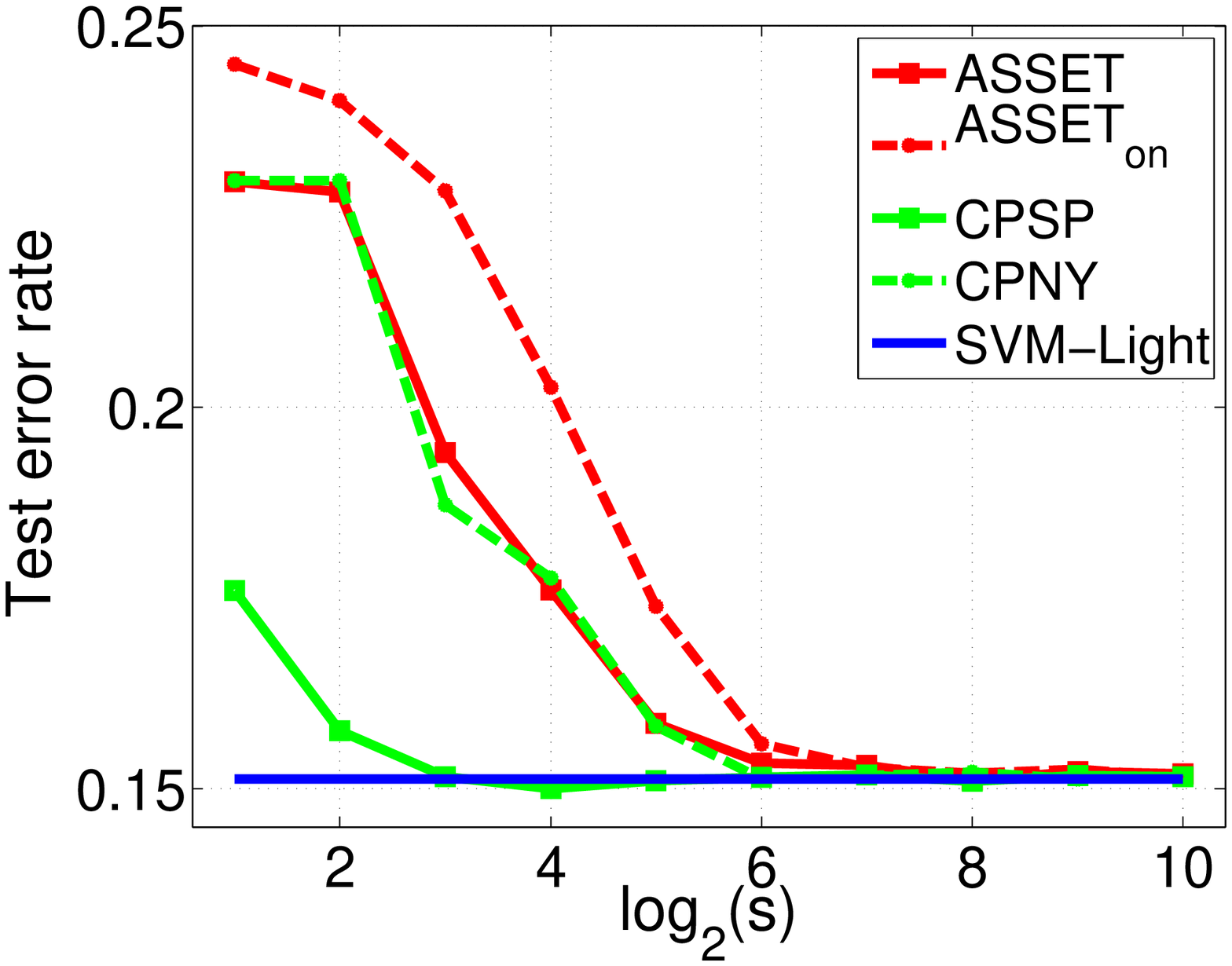}\label{fig:rank-adult}}
\subfigure[\texttt{MNIST}]{\includegraphics[width=.30\textwidth, trim=0px -16px 8px 15px]{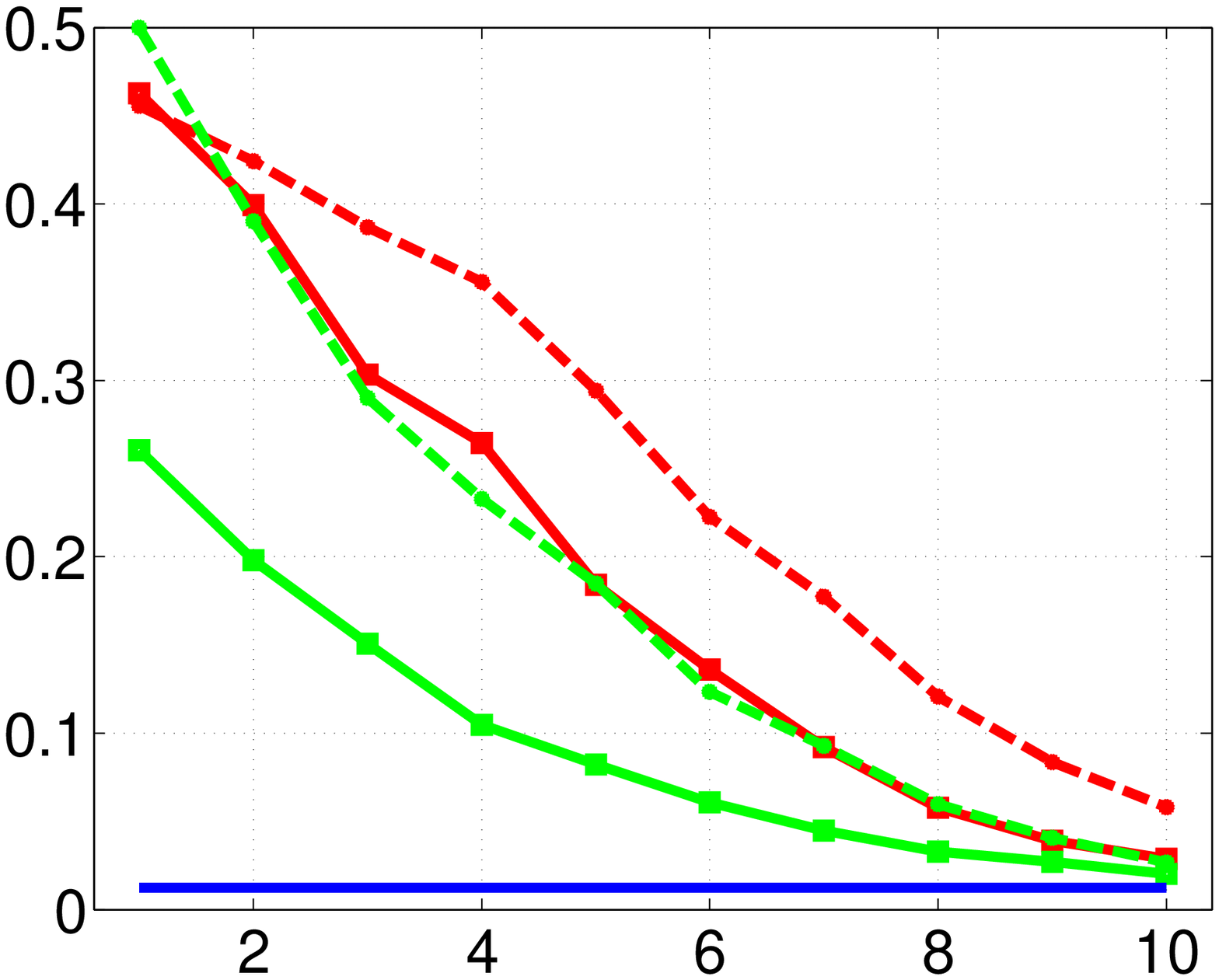}\label{fig:rank-mnist}}
\subfigure[\texttt{CCAT}]{\includegraphics[width=.30\textwidth, trim=0px -16px  8px 15px]{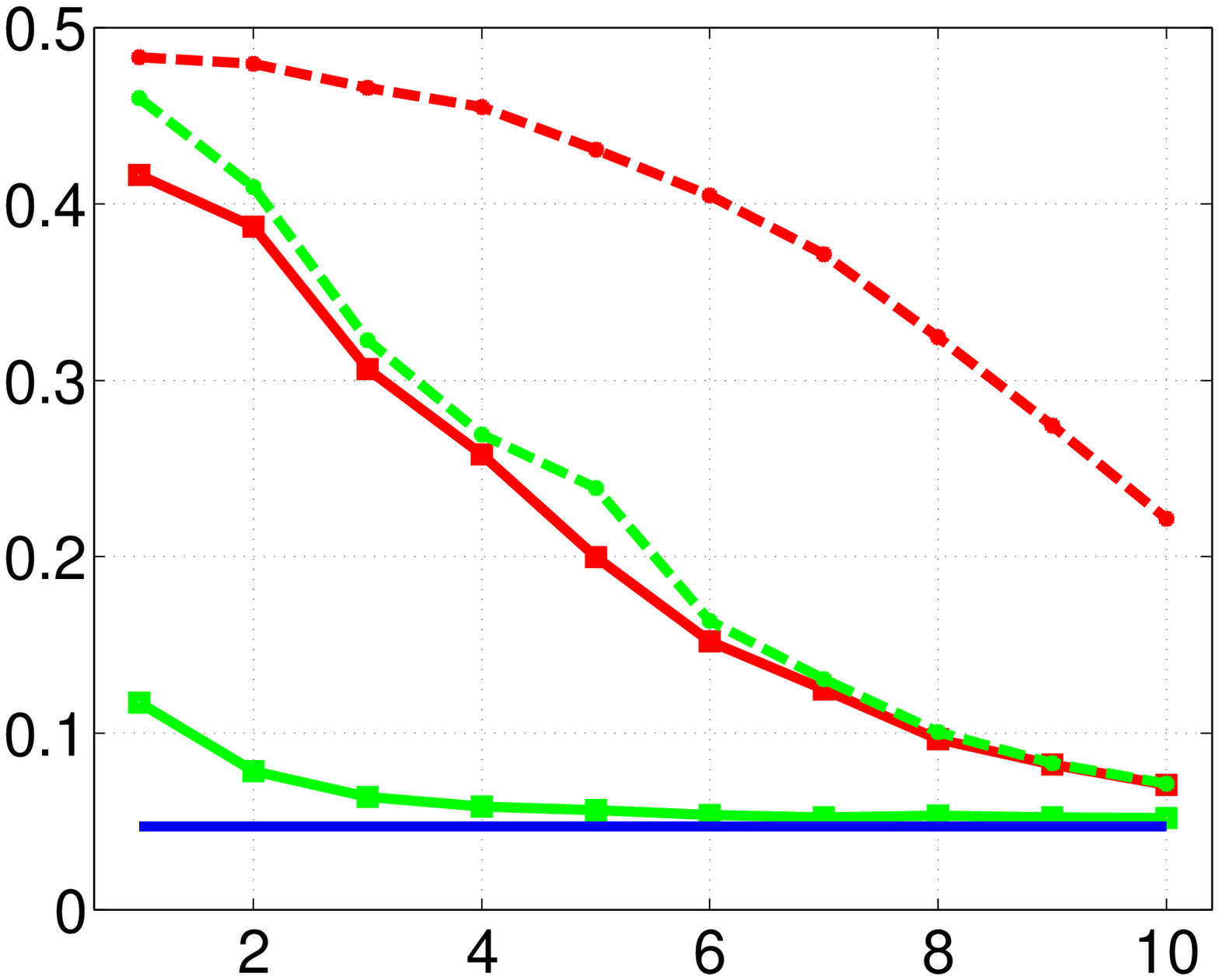}\label{fig:rank-ccat}}\\
\subfigure[\texttt{IJCNN}]{\includegraphics[width=.30\textwidth, trim=0px -15px 8px 15px]{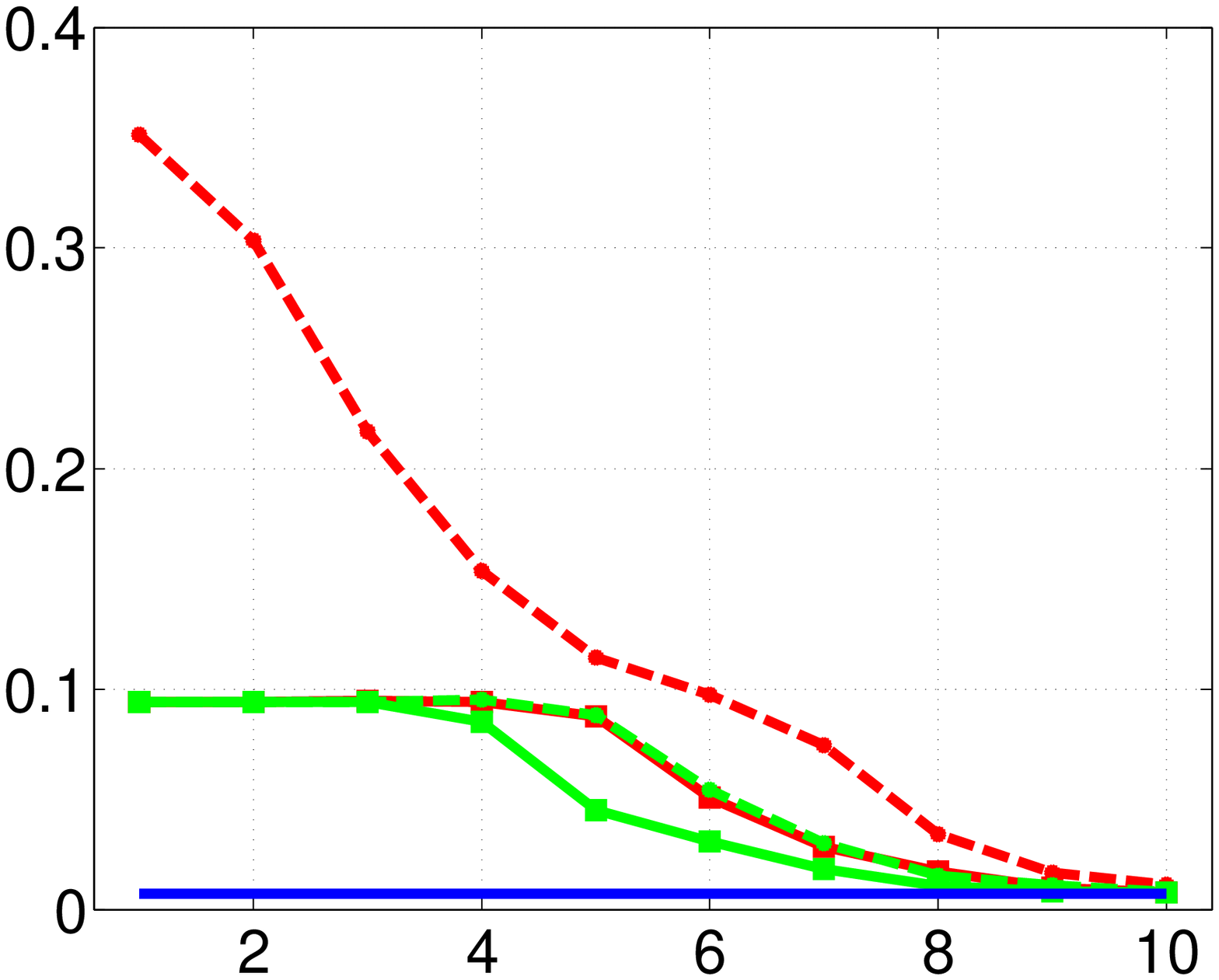}\label{fig:rank-ijcnn}}
\subfigure[\texttt{COVTYPE}]{\includegraphics[width=.30\textwidth, trim=0px -15px 8px 15px]{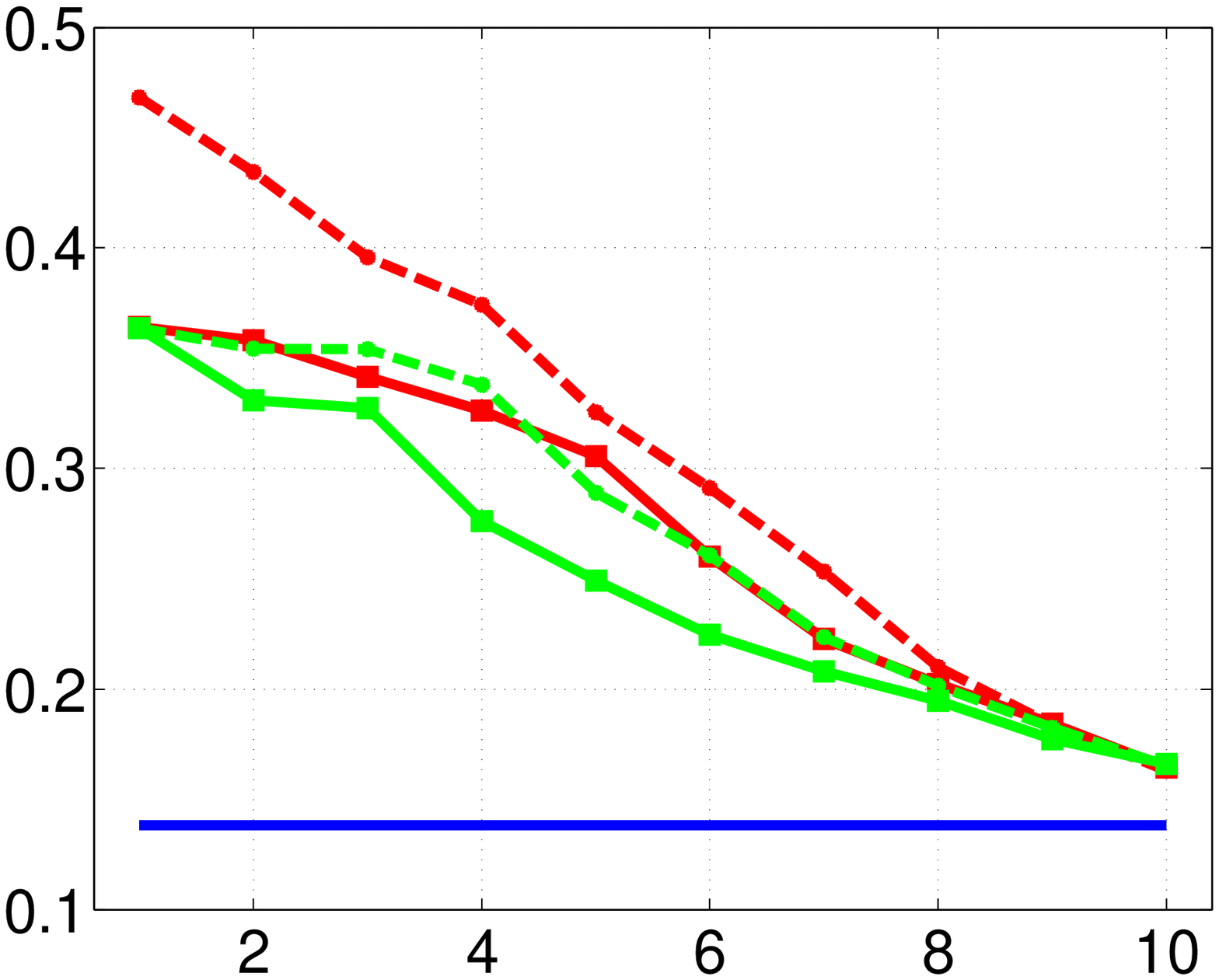}\label{fig:rank-covtype}}
\caption{The effect of the approximation dimension to the test
  error. The x-axis shows the values of $s$ in log scale (base 2).
  \label{fig:rank-testerr}}
\end{figure*}

The first experiment investigates the effect of kernel approximation
dimension on classification accuracy. We set the dimension parameter
$s$ in Section~\ref{sec:approxK} to values in the range $[2,1024]$,
with the eigenvalue threshold $\epsilon_d=10^{-16}$. Note that $s$ is
an upper bound on the actual dimension $d$ of approximation for
{\assetboth}, but is equal to $d$ in the case of {\assetbothon}.
The CPSP and CPNY have a parameter similar to $s$ (as an upper bound
of $d$); we compared by setting that parameter to the same values as
for $s$.  

For the first five moderate-size tasks, we ran our algorithms for 1000
epochs ($1000 m$ iterations) so that they converged to a near-optimal
value with small variation among different randomization.  We obtained
the baseline performance of these tasks by running {\svmlight}.
{\svmlight} does not have dimension parameters but can be expected to
give the best achievable performance by the kernel-approximate
algorithms as $s$ approaches $m$.

Figure~\ref{fig:rank-testerr} shows the results. Since {\asset} and
{\assetst} yield very similar results in all experiments, we do not
plot {\assetst}. (For the same reason we show only
{\assetonline} without {\assetstonline}.)
When the value of $\sigma$
is very small, as in Figure~\ref{fig:rank-adult} of \texttt{ADULT}
data set, all codes achieve good classification performance for small
dimension. In other data sets, the chosen values of $\sigma$ are
larger and the intrinsic rank of the kernel matrix is higher, so
classification performance continues to improve as $s$ increases.
%

Interestingly, {\assetonline} (feature mapping approximation) seems to
require more dimension than {\asset} (kernel matrix approximation) to
produce similar classification accuracy.
However in practice we can specify larger dimension for {\assetonline}
than for {\asset} since the former requires less computation than the
latter.
%
For a given dimension, the overall performance of {\assetonline} is
worse than other methods, especially in the \texttt{CCAT} experiment.

The cutting plane method CPSP generally requires lower dimension than
the others to achieve the same prediction performance. 
This is because CPSP spends extra time to construct optimal basis
functions, whereas the other methods depend on random
sampling. However, all approximate-kernel methods including CPSP
suffer considerably from the restriction in dimension for the
\texttt{COVTYPE} task.

\subsection{Speed of Achieving Similar Test Error}

\begin{table*}[t!]
  \caption{Training CPU time (in seconds, h:hours) and test error rate (\%) in parentheses. 
    Kernel approximation dimension is varied by setting $s=512$ and $s=1024$ for {\asset}, {\assetst}, CPSP 
    and CPNY. Decomposition methods do not depend on $s$, so their results are the same in both tables.\label{tbl:testerr}}
  \center
\begin{tabular}{|l||r@{\zg}r|r@{\zg}r||r@{\zg}r|r@{\zg}r||r@{\zg}r|r@{\zg}r|}\hline
  & \mc{4}{c||}{Subgradient Methods}    & \mc{4}{c||}{Cutting-plane} & \mc{4}{c|}{Decomposition} \\ \hline\hline
  $s=512$        & \mc{2}{c|}{\asset} & \mc{2}{c||}{\assetst} & \mc{2}{c|}{CPSP}  & \mc{2}{c||}{CPNY}& \mc{2}{c|}{LASVM}   & \mc{2}{c|}{\svmlight} \\ \hline
  \texttt{ADULT}  & {23} &(15.1$\pm$0.06) &  24 &(15.1$\pm$0.06)  & 3020 &(15.2)& 8.2h  &(15.1) & 1011  &(18.0) & 857 &(15.1)\\
  \texttt{MNIST}  & {97} &(4.0$\pm$0.05)  & 101 &(4.0$\pm$0.04)   & 550 &(2.7)  & 348   &(4.1)  & 588   &(1.4)  & 1323 &(1.2)\\
  \texttt{CCAT}   & 95 &(8.2$\pm$0.08)    & 99 &(8.3$\pm$0.06)    & 800 &(5.2)  & {62}  &(8.3)  & 2616  &(4.7)  & 3423 &(4.7)\\
  \texttt{IJCNN}  & {87} &(1.1$\pm$0.02)  & 89 &(1.1$\pm$0.02)    & 727 &(0.8)  & 320   &(1.1)  & 288   &(0.8)  & 1331 &(0.7)\\
  \texttt{COVTYPE}& 697 &(18.2$\pm$0.06)  & {586}&(18.2$\pm$0.07) & 1.8h &(17.7) & 1842 &(18.2) & 38.3h &(13.5) & 52.7h &(13.8)\\
\hline\hline
  $s=1024$        & \mc{2}{c|}{\asset} & \mc{2}{c||}{\assetst} & \mc{2}{c|}{CPSP}  & \mc{2}{c||}{CPNY}& \mc{2}{c|}{LASVM}   & \mc{2}{c|}{\svmlight} \\ \hline
 \texttt{ADULT}  &{78} &(15.1$\pm$0.05)  &83 &(15.1$\pm$0.04)   & 3399 &(15.2) & 7.5h &(15.2)  & 1011 &(18.0)  & 857 &(15.1)\\
 \texttt{MNIST}  &{275} &(2.7$\pm$0.03)  &275 &(2.7$\pm$0.02)   & 1273 &(2.0)  & 515  &(2.7)   & 588  &(1.4)   & 1323 &(1.2)\\
 \texttt{CCAT}   & 265  &(7.1$\pm$0.05)  &278 &(7.1$\pm$0.04)   & 2950 &(5.2)  & {123}&(7.2)   & 2616 &(4.7)   & 3423 &(4.7)\\
 \texttt{IJCNN}  &307   &(0.8$\pm$0.02)  &{297} &(0.8$\pm$0.01) & 1649  &(0.8) & 598  &(0.8)   & 288  &(0.8)   & 1331 &(0.7)\\
 \texttt{COVTYPE}&2259 &(16.5$\pm$0.04)  &{2064}&(16.5$\pm$0.06) & 4.1h &(16.6)& 3598 &(16.5)  & 38.3h &(13.5) & 52.7h &(13.8)\\
\hline
\end{tabular}
\end{table*}

Here
we ran all algorithms other than ours with their default stopping
criteria. For {\asset} and {\assetst}, we checked the classification
error on the test sets ten times per epoch, terminating when the error
matched the performance of CPNY. (Since this code uses a similar
Nystr\"om approximation of the kernel, it is the one most directly
comparable with ours in terms of classification accuracy.) The test
error was measured using the iterate averaged over the 100 iterations
immediately preceding each checkpoint.

Results for the first five data sets are shown in
Table~\ref{tbl:testerr} for $s=512$ and $s=1024$. (Note that LASVM and
{\svmlight} do not depend on $s$ and so their results are the same in
both tables.)
Our methods are the fastest in most cases.
Although the best classification errors among the approximate codes
are obtained by CPSP, the runtimes of CPSP are considerably longer
than for our methods. In fact, if we compare the performance of
{\asset} with $s=1024$ and CPSP with $s=512$, {\asset} achieves
similar test accuracy to CPSP (except for \texttt{CCAT}) but is faster
by a factor between two and forty.  CPNY requires an abnormally long
run time on the \texttt{ADULT} data set; we surmise the code may be
affected by numerical difficulties.

It is noteworthy that {\asset} shows similar performance to {\assetst}
despite the less impressive theoretical convergence rate of the
former. This is because the values of optimal regularization parameter
$\lambda$ were near zero in our experiments, and thus the objective
function lost the strong convexity condition required for {\assetst}
to work. We observed similar slowdown of Pegasos and SGD when
$\lambda$ approaches zero for linear SVMs.

\subsection{Large-Scale Performance}

We take the final data set \texttt{MNIST-E} and compare the
performance of {\assetonline} and {\assetstonline} to the online SVM
code LASVM. (Other algorithms such as CPSP, CPNY, and {\svmlight} are
less suitable for large-scale comparison because they operate in batch
mode.)
For a fair comparison, we fed the training samples to the algorithms
in the same order.

Figure~\ref{fig:mnist1m} shows the progress on a single run of our
algorithms, with various approximation dimensions $d$ (which is equal
to $s$ in this case) in the range $[1024,16384]$. Vertical bars
in the graphs indicate the completion of training. {\assetonline}
tends to converge faster and shows smaller test error values than
{\assetstonline}, despite the theoretical slower convergence rate of
the former. With $d=16384$, {\assetonline} and {\assetstonline}
required 7.2 hours to finish with a solution of $2.7\%$
and $3.5\%$ test error rate, respectively. LASVM produced a better
solution with only $0.2\%$ test error rate, but it required 4.3 days
of computation to complete a single pass through the same training
data.
\begin{figure}[tb!] 
\centering
\includegraphics[scale=.55,trim=20px 30px 2px 0px]{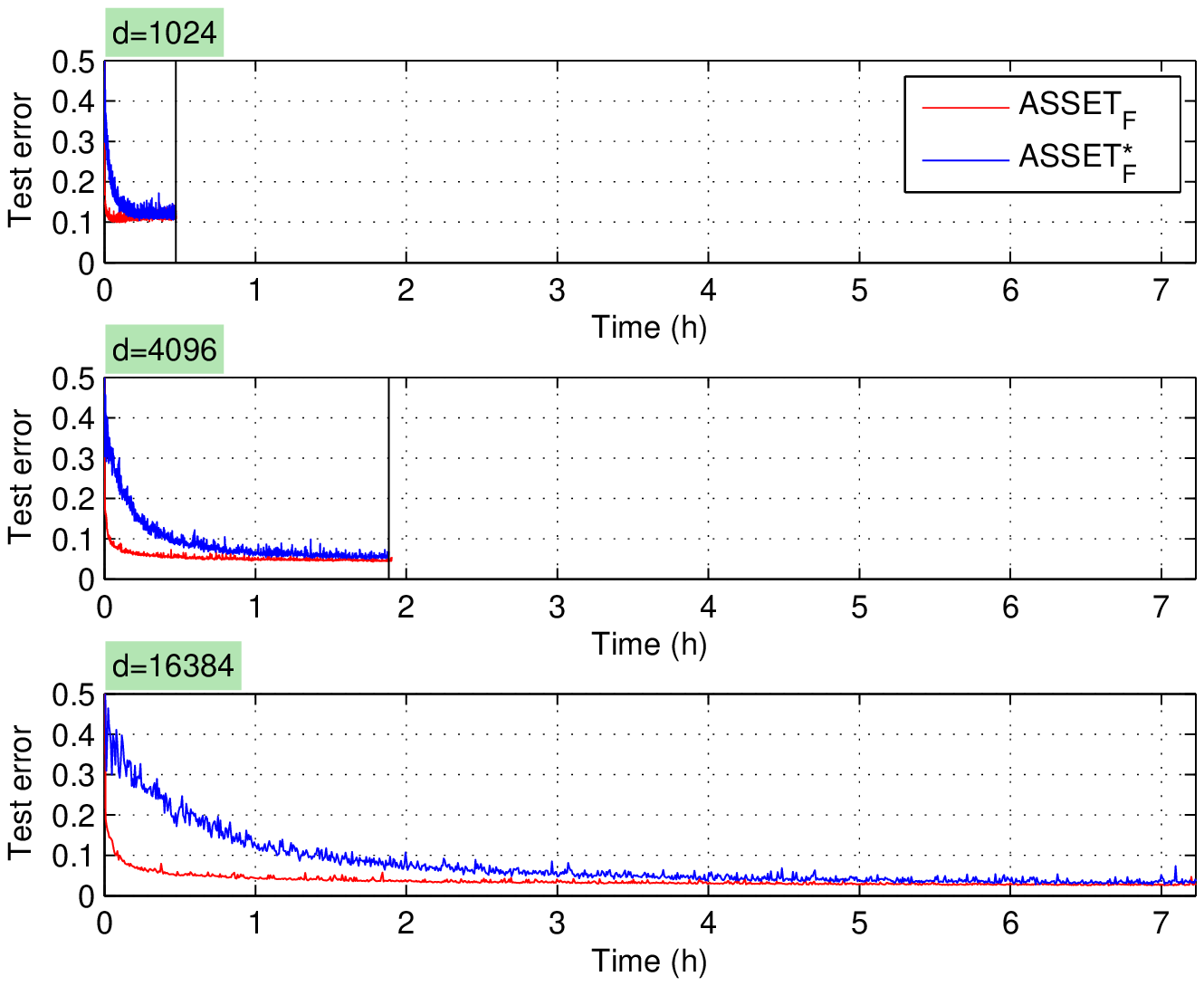}
\smallskip
\caption{Progress of {\assetonline} and {\assetstonline} to their
  completion (\texttt{MNIST-E}), in terms of test error
  rate.\label{fig:mnist1m}}
\end{figure}

\section{{Conclusion}}
\noindent
We have proposed a stochastic gradient framework for training
large-scale and online SVMs using efficient approximations to
nonlinear kernels.
Since our approach does not require strong convexity of the objective
function or dual reformulations for kernelization, it can be extended
easily to other kernel-based learning problems.

\section*{{Acknowledgements}}
\noindent
The authors acknowledge the support of NSF Grants DMS-0914524 and
DMS-0906818, and of the German Research Foundation (DFS) grant for the
Collaborative Research Center SFB~876: ``Providing Information by
Resource-Constrained Data Analysis''.

\bibliographystyle{abbrvnat}
\bibliography{asset_icpram} 

\end{document}